\newtheorem{THEOREM}{Theorem}
\newenvironment{theorem}{\begin{THEOREM} \hspace{-.85em} {\bf :} }%
                        {\end{THEOREM}}
\newtheorem{LEMMA}[THEOREM]{Lemma}
\newenvironment{lemma}{\begin{LEMMA} \hspace{-.85em} {\bf :} }%
                      {\end{LEMMA}}
\newtheorem{COROLLARY}[THEOREM]{Corollary}
                          {\end{COROLLARY}}
\newtheorem{PROPOSITION}[THEOREM]{Proposition}
\newenvironment{proposition}{\begin{PROPOSITION} \hspace{-.85em} {\bf :} }%
                            {\end{PROPOSITION}}
\newtheorem{DEFINITION}[THEOREM]{Definition}
\newenvironment{definition}{\begin{DEFINITION} \hspace{-.85em} {\bf :} \rm}%
                            {\end{DEFINITION}}
\newtheorem{CLAIM}[THEOREM]{Claim}
                            {\end{CLAIM}}
\newtheorem{EXAMPLE}[THEOREM]{Example}
\newenvironment{example}{\begin{EXAMPLE} \hspace{-.85em} {\bf :} \rm}%
                            {\end{EXAMPLE}}
\newtheorem{REMARK}[THEOREM]{Remark}
                            {\end{REMARK}}
							\newtheorem{NOTATION}[THEOREM]{Notation}
							                            {\end{NOTATION}}
\newenvironment{proof}{\noindent {\bf Proof:} \hspace{.677em}}%
                     {}
\newcommand{\bbox}{\vrule height7pt width4pt depth1pt}
\newcommand{\qed}{\bbox\vspace{0.1in}}
\DeclareMathAlphabet{\mathitbf}{OML}{cmm}{b}{it}
\newcommand{\sub}{_}
\def\su{^}
\newcommand{\nat}{{\mathbb{N}}}
\newcommand{\gt}{>}
\newcommand{\I}{{\cal I}}
\renewcommand{\L}{{\cal L}}
\newcommand{\M}{{\cal M}}
\newcommand{\N}{{\cal N}}
\newcommand{\U}{\mathbb{N}}
\newcommand{\UU}{{\cal U}}
\newcommand{\V}{{\cal V}}
\newcommand{\abs}[1]{ \llbracket  #1\rrbracket}
\newcommand{\set}[1]{\left\{ #1 \right\}}
\newcommand{\atom}{\textsc{ATOMS}}
\newcommand{\langs}{\L}
\newcommand{\atoms}{{\atom}}
\newcommand{\prop}{\textrm{proper}^+}
\newcommand{\blemma}{\begin{lemma}}
\newcommand{\elemma}{\end{lemma}}
\newcommand{\bthm}{\begin{theorem}}
\newcommand{\ethm}{\end{theorem}}
\newcommand{\bprf}{\begin{proof}}
\newcommand{\eprf}{\end{proof}}
\newcommand{\bpro}{\begin{proposition}}
\newcommand{\epro}{\end{proposition}}
\newcommand{\bi}{\begin{itemize}}
\newcommand{\ei}{\end{itemize}}
\newcommand{\be}{\begin{enumerate}}
\newcommand{\ee}{\end{enumerate}}
\newcommand{\beq}{\begin{equation}}
\newcommand{\eeq}{\end{equation}}
\newcommand{\bcase}{\begin{cases}}
\newcommand{\ecase}{\end{cases}}
\newcommand{\eps}{\epsilon}
\newcommand{\proper}{\textrm{proper}^+}
\begin{document}

	
\title{Implicitly Learning to Reason in First-Order Logic}

\author{%
 Vaishak Belle$^1$,
  Brendan Juba$^2$ \\[2ex] 
  $^1$University of Edinburgh, UK \& Alan Turing Institute, UK \\ 
 \texttt{vaishak@ed.ac.uk}\\
  $^2$Department of Computer Science \& Engineering,
  Washington University in St.\ Louis, USA \\  
  \texttt{bjuba@wustl.edu} 
}

\maketitle

\begin{abstract}
 We consider the problem of answering queries about formulas of first-order logic based on background knowledge partially represented explicitly as other formulas, and partially represented as examples independently drawn from a fixed probability distribution. PAC semantics, introduced by Valiant, is one rigorous, general proposal for learning to reason in formal languages: although weaker than classical entailment, it allows for a powerful model theoretic framework for answering queries while requiring minimal assumptions about the form of the distribution in question. To date, however, the most significant limitation of that approach, and more generally most machine learning approaches with robustness guarantees, is that the logical language is ultimately essentially propositional, with finitely many atoms. Indeed, the theoretical findings on the learning of relational theories in such generality have been resoundingly negative. This is despite the fact that first-order logic is widely argued to be most appropriate for representing human knowledge. 
In this work, we present a new theoretical approach to robustly learning to reason in first-order logic, and consider universally quantified clauses over a countably infinite domain. Our results exploit symmetries exhibited by constants in the language, and generalize the notion of implicit learnability to show how queries can be computed against (implicitly) learned first-order background knowledge. 

\end{abstract}

\renewcommand{\mathit}{\emph}

\section{Introduction} 
\label{sec:introduction}

The tension between \mathit{deduction} and \mathit{induction} is perhaps the most fundamental issue in areas such as philosophy, cognition and artificial intelligence. The deduction camp concerns itself with questions about the expressiveness of formal languages for capturing knowledge about the world, together with proof systems for reasoning from such {\it knowledge bases.} The learning camp attempts to generalize from examples about partial descriptions about the world. In an influential paper,  \cite{valiant2000robust} recognized that the challenge of learning should be integrated with deduction. In particular, he proposed a semantics to capture the quality possessed by the output of (probably approximately correct) PAC-learning algorithms when formulated in a logic. Although weaker than classical entailment, it allows for a powerful model theoretic framework for answering queries.




From the standpoint of
learning an  expressive logical knowledge base and reasoning with it, most PAC results are somewhat discouraging. For example, in agnostic learning \cite{kearns1994toward} where one does not require examples (drawn from an arbitrary distribution) to be fully consistent with learned sentences, efficient algorithms for learning conjunctions would yield
an efficient algorithm for PAC-learning DNF (also over arbitrary distributions), which current evidence suggests to be intractable \cite{daniely2016complexity}. Thus, it is not  surprising that when it comes to first-order logic (FOL), very little work tackles the problem in a general manner. This is despite the fact that FOL is widely argued to be most appropriate for representing human knowledge (e.g., \cite{some-philosophical-problems-from,the-role-of-logic-in-knowledge-representation,the-logic-of-knowledge-bases}). For example, \cite{cohen1994learnability} consider the problem of the learnability of description logics with equality constraints. While description logics are already restricted fragments of FOL in only allowing unary and some binary predicates, it is shown that such a fragment cannot be tractably learned, leading to the identification of syntactic restrictions for learning from positive examples alone. Analogously, when it comes to the learning of logic programs \cite{cohen1995polynomial}, which in principle may admit infinitely many terms, syntactic restrictions are also typical  \cite{first-order-jk-clausal-theories-are-pac-learnable}. 

In this work, we present new results on learning to reason in FOL knowledge bases. In particular, we consider the problem of answering queries about FOL  formulas  based on background knowledge partially represented explicitly as other formulas, and partially represented as examples independently drawn from a fixed probability distribution. Our results are based on a surprising observation made in \cite{juba2013implicit} about the advantages of eschewing the explicit construction of a hypothesis, leading to a paradigm of \mathit{implicit learnability.} Not only does it enable a form of agnostic learning while circumventing known barriers, it also avoids the design of an often restrictive and artificial choice for  representing hypotheses. (See, for example, \cite{khardon1999learning}, which is  similar in spirit in allowing declarative background knowledge but only permits constant-width clauses.)  In particular, implicit learning allows such learning from partially observed examples, which is commonplace when knowledge bases and/or queries address entities and relations not observed in the data used for learning. 

That work was limited to the propositional setting, however. Here, we develop a first-order logical generalization. Since reasoning in full FOL is undecidable we need to consider a fragment, but the fragment we identify and are able to learn and reason with is expressive and powerful. Consider that standard databases correspond to a maximally consistent and finite set of literals: every relevant atom is known to be true and stored in the database, or known to be false, inferred by (say) negation as failure. Our fragment corresponds to a consistent but infinite set of ground clauses, not necessarily maximal. 
To achieve the generalization, we  revisit the PAC semantics and exploit symmetries exhibited by constants in the language. Moreover, the underlying language is general in the sense that no restrictions are posed on clause length, predicate arity, and other similar technical devices seen in PAC results. We hope the simplicity of the framework is appealing to the readers and hope our results will renew interest in learnability for expressive  languages with quantificational power.  




We remark that our sole focus is in  PAC-semantics approaches, but there are also other families of methods for unifying statistical and logical representations, that fall under the banner of \mathit{statistical relational learning} (SRL) (e.g., \cite{statistical-relational-ai:-logic-probability}). SRL  includes widely used formalisms such as Markov Logic Networks \cite{markov-logic-networks} and frameworks such as  Inductive Logic Programming \cite{inductive-logic-programming:-theory}. Generally speaking, there are significant differences to PAC-semantics approaches, such as in terms of the learning regime, the notion of correctness and the underlying algorithmic machinery. For example, Markov Logic Networks use approximate maximum-likelihood learning strategies to capture the distribution of the data, whereas in PAC formulations, 
 one considers an arbitrary unknown distribution over the data and studies the question of what formulas are learnable whilst  costing for the number of examples needed to be sampled from that distribution. Of course, there is much to be gained by attempting to integrate these communities; see, for example, \cite{cohen1995polynomial}. These differences notwithstanding, the learning of logical theories is usually restricted to finite-domain first-order logic, and so it is essentially propositional, and in that regard, our setting is significantly more challenging. 
 

\section{Logical Framework} 
\label{sec:preliminaries} 




\textbf{Language:} We let \( \L \) be a first-order language with equality and relational symbols \( \set{P(x), \ldots, Q (x\sub 1, \ldots, x\sub k), \ldots} \), variables \( \set{x,y, z,  \ldots} \), and a countably infinite set of \emph{rigid designators} or   \emph{names}, say, the set of natural numbers \( \U \), 
serving as the domain of discourse for quantification. Well-defined formulas are constructed using logical connectives \( \set{\neg, \lor, \forall, \land, \exists, \supset} \), as usual. 
 Together with equality, names essentially realize an infinitary version of the unique-name assumption.\footnote{Our language \( \L \) is essentially equivalent to standard FOL together 
 a unique-name assumption for infinitely many constants~\cite[Definition 3]{a-completeness-result-for-reasoning-with}.  
 
 
 
 In general, the unique-name assumption  does not rule out capturing uncertainty about the identity of objects; see    \cite{efficient-reasoning-in-proper-knowledge,first-order-open-universe-pomdps}, for example. 
} 

The set of (ground) atoms is obtained as:\footnote{Because equality is treated separately, atoms and clauses do not include equalities.} \(
	\atoms = \set{P(a\sub 1, \ldots, a\sub k) \mid \textrm{$P$ is a predicate, $a\sub i \in \U$}}.
\) 
We sometimes refer to elements of $\atoms$ as propositions, and ground formulas as propositional formulas. We will use \( p, q, e \) to denote  atoms, and \( \alpha, \beta, \phi, \psi \) to denote  ground formulas. \smallskip 

\textbf{Semantics:} A \( \langs \)-model \( M \) is a \( \set{0,1} \) assignment to the elements of \( \atoms. \) Using \( \models \) to denote satisfaction, the semantics for \( \phi\in\langs \) is defined as usual inductively, but with equality as identity: \( M \models (a=b) \) iff \( a \) and \( b \) are the same names, and quantification understood substitutionally over all names in \( \U \): \( M \models \forall x \phi(x) \) iff \( M\models \phi(a) \) for all \( a\in \U. \)
We say that \( \phi \) is \emph{valid} iff for every \( \langs \)-model \( M \),  \( M\models \phi \). Let the set of all models be \( \M. \)
\smallskip 


\textbf{Representation:} Like in standard FOL, reasoning over the full fragment of $\L$ is undecidable. Interestingly, owing to a fixed, albeit countably infinite, domain of discourse, the \emph{compactness} property that holds for classical first-order logic \emph{does not hold} in general \cite{a-completeness-result-for-reasoning-with}. For example, \(
	\set{\exists x P(x), \neg P(1), \neg P(2), \ldots}
\) 
is an unsatisfiable theory for which every finite subset is indeed satisfiable. However, as identified in \cite{open-universe-weighted-model-counting}, and  earlier in  \cite{evaluation-based-reasoning-with-disjunctive}, the case of disjunctive knowledge is more manageable. 
In particular, we will be  interested in learning and reasoning with incomplete knowledge bases with disjunctive information \cite{open-universe-weighted-model-counting}: 


\begin{definition} An \emph{acceptable} equality is of the form \( x = a \), where \( x \) is any variable and \( a \) any name.  	
	Let $e$ range over formulas built from acceptable equalities and connectives \( \set{\neg, \lor,\land} \). Let \( c \) range over quantifier-free  disjunctions of (possibly non-ground) atoms. Let \( \forall \phi \)  mean the universal closure of \( \phi. \) A formula of the form $\forall(e \supset c)$ is called a $\forall$-clause. A knowledge base (KB) $\Delta$ is \( \proper \) if it is a finite non-empty set of $\forall$-clauses. The \emph{rank} of \( \Delta \) is 
	the maximum number of variables mentioned in any \( \forall \)-clause in \( \Delta \).  
\end{definition}

This fragment is very expressive.  Consider that  standard databases correspond to a maximally consistent and finite set of literals: every relevant atom is known to be true and stored in the database, or known to be false, inferred by (say) negation as failure. In contrast, such KBs correspond to a consistent but infinite set of ground clauses, not necessarily maximal. \smallskip 

\newcommand{\gnd}{\textsc{GND}}
\newcommand{\kb}{\Delta}
\newcommand{\cwa}{\gnd^ 0}
\newcommand{\ou}{\gnd ^ - }

\textbf{Grounding:} A ground theory is obtained from \( \Delta \) by substituting variables with names. Suppose \( \theta \) denotes a substitution. For any set of names \( C \subseteq \U, \) we write \( \theta\in C \) to mean substitutions are only allowed wrt the names in \( C. \) Formally, we define:

%

 \begin{itemize}

\item \( \gnd(\Delta) = \set{c\theta\mid \forall(e\supset c) \in \Delta, \theta \in \U  \textrm{ and } \models e\theta} \); 
	
	\item For  \( z\geq 0, \) \( \gnd(\Delta, z) = \set{c\theta\mid \forall (e\supset c)\in \Delta, \models e\theta, \theta \in Z} \), where \( Z \) is the set of names mentioned in \( \Delta \) plus \( z \) (arbitrary) new ones; 
	\item For $C \subseteq \U$, $\gnd(\kb,C) = \{c\theta\mid \forall (e\supset c)\in \Delta, \models e\theta, \theta \in Z\}$ where $Z$ is the set of names mentioned in \(\Delta\) plus the names in $C$; 
	\item \( \ou(\Delta) = \gnd(\Delta, z) \) where \( z \) is the rank of \( \Delta. \) 
	
\end{itemize}

{\bf Reasoning:} Unfortunately, arbitrary reasoning with such  KBs  is also undecidable \cite[Theorem 7]{evaluation-based-reasoning-with-disjunctive}. Various proposals have appeared to consider that problem: in \cite{evaluation-based-reasoning-with-disjunctive}, for example, a sound but incomplete evaluation-based semantics is studied. In \cite{open-universe-weighted-model-counting}, it is instead shown that when the query is limited to ground formulas, we can reduce first-order entailment to propositional satisfiability:

\begin{theorem}\label{thm grounding trick} \cite{open-universe-weighted-model-counting} Suppose \( \Delta \) is a \( \proper \) KB, and \( 
	\alpha \) is a ground formula. Then, \( 
\Delta \models\alpha \) iff \( \ou(\Delta \land \neg\alpha) \) is unsatisfiable. 
	
\end{theorem}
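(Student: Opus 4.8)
The plan is to prove the equivalence in two stages, passing first from first-order entailment over the fixed domain \( \U \) to propositional unsatisfiability of the \emph{full} grounding, and then from the full grounding to the bounded grounding \( \ou(\cdot) \). For the first stage, note that by the substitutional reading of the quantifiers an \( \L \)-model \( M \) satisfies a \( \forall \)-clause \( \forall(e\supset c) \) exactly when it satisfies every ground instance \( c\theta \) with \( \theta\in\U \) and \( \models e\theta \); hence \( M\models\Delta\land\neg\alpha \) iff \( M \), viewed as a propositional assignment to \( \atoms \), satisfies \( \gnd(\Delta\land\neg\alpha) \) (the ground clauses coming from \( \neg\alpha \) being simply its CNF, which is ground since \( \alpha \) is). Thus \( \Delta\models\alpha \) iff \( \gnd(\Delta\land\neg\alpha) \) is propositionally unsatisfiable, and it remains to show that \( \gnd(\Delta\land\neg\alpha) \) and \( \ou(\Delta\land\neg\alpha) \) are equisatisfiable.

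One inclusion is immediate: since every substitution into \( Z \) is a substitution into \( \U \), we have \( \ou(\Delta\land\neg\alpha)\subseteq\gnd(\Delta\land\neg\alpha) \), so any assignment satisfying the latter satisfies the former, and in particular unsatisfiability of \( \ou \) forces unsatisfiability of \( \gnd \). The content of the theorem is the converse, which I would establish in contrapositive form: from a model \( M' \) of \( \ou(\Delta\land\neg\alpha) \) I will build a model \( M \) of the full grounding. Let \( N \) be the finite set of names occurring in \( \Delta \) or \( \alpha \), so that \( Z=N\cup F \) with \( F \) the rank-many fresh names, and fix any \emph{retraction} \( r\colon\U\to Z \) that is the identity on \( N \) and sends every other name to a fixed element of \( F \). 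Define \( M \) by pulling back along \( r \), i.e.\ \( M(P(a\sub 1,\ldots,a\sub k))=M'(P(r(a\sub 1),\ldots,r(a\sub k))) \) for every atom; this is well defined because \( r \) is a single global function, sidestepping the consistency problem that would arise from trying to rename the anonymous constants clause-by-clause.

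To verify \( M\models\gnd(\Delta\land\neg\alpha) \), take any instance \( c\theta \) with \( \models e\theta \) and consider the substitution \( r\theta \), which maps into \( Z \). The key observation is that the guard is preserved: since \( e \) is a Boolean combination of acceptable equalities \( x=a \) whose names \( a \) all lie in \( N \), and since \( r \) fixes \( N \) and maps everything else into \( F \) (disjoint from \( N \)), one checks literal-by-literal that \( \theta(x)=a \) iff \( r(\theta(x))=a \) for each such \( a \), whence \( \models e\theta \) iff \( \models e(r\theta) \). Therefore \( c(r\theta)\in\ou(\Delta\land\neg\alpha) \) and is satisfied by \( M' \); since \( M \) agrees with \( M'\circ r \) literal-wise, the disjunct of \( c\theta \) corresponding to a true disjunct of \( c(r\theta) \) is true under \( M \), so \( M\models c\theta \). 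As \( r \) fixes \( N \), \( M \) also agrees with \( M' \) on the ground clauses of \( \neg\alpha \), so \( M\models\Delta\land\neg\alpha \) and hence \( \Delta\not\models\alpha \). I expect the main subtlety to be exactly this preservation-of-guards-under-collapse step: the argument works precisely because the language can only test equality against the finitely many named constants, so identifying all anonymous constants is semantically harmless, and in fact a single fresh name already suffices, with the rank merely guaranteeing \( F\neq\emptyset \) and giving a uniform, safe choice of \( Z \).
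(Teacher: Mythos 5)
The paper itself only cites this theorem and sketches its proof (a bijection between names showing that all names outside the finite grounding behave identically, plus Lemma~\ref{lem extending grounding}, which extends a model of the bounded grounding to one of the full grounding). Your proof is correct for the fragment as defined in this paper and realizes the same underlying idea---interchangeability of anonymous names---but with a genuinely different technical device: instead of iteratively extending a model of \( \ou(\Delta\land\neg\alpha) \) via name bijections, you pull back a single model along one global retraction \( r\colon\U\to Z \) collapsing all unnamed constants to one fresh name. This is cleaner in that it avoids any induction or consistency bookkeeping across instances, and your two-stage decomposition (substitutional semantics reduces \( \models \) to the full grounding; then equisatisfiability of \( \gnd \) and \( \ou \)) is exactly the right skeleton. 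One caveat deserves emphasis: your argument, and in particular your closing claim that a single fresh name suffices, hinges on the fact that acceptable equalities here have the form \( x=a \) with \( a \) a name, so guards can never test \( x=y \) between two variables. In the more general \( \proper \) setting of the cited literature, where the guard is an arbitrary ewff and may contain \( x\neq y \), the collapse \( r \) destroys guard truth values (e.g.\ \( \forall x\forall y\,(x\neq y\supset c(x,y)) \) has instances on distinct anonymous names whose images \( c(f,f) \) are not in the bounded grounding), and one genuinely needs rank-many fresh names together with an injective renaming per instance---which is what the bijection argument in the cited proof provides. So your proof is sound for the theorem as stated in this paper, but the simplification you note does not transfer to the general setting the citation addresses.
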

Here, the RHS of the iff is a  propositional formula, obtained by a finite grounding, as defined above. 

\begin{example} Suppose \( \kb  = \{  \forall x(\mathit{Grad}(x) \lor \mathit{Prof}(x)), \forall x (x\neq \mathit{charles} \supset \mathit{Grad}(x)) \} \) and the query is \( \mathit{Grad}(\mathit{logan}) \). Given that the KB's rank is 1, consider  the grounding of the KB and the negated query wrt \( \{ \mathit{charles}, \mathit{logan}, \mathit{jean} \} \) (here \mathit{jean} is chosen arbitrarily). It is indeed unsatisfiable.  
	
	
\end{example}

It is worth noting that the proof here (and in other proposals with \( \L \)-like languages   \cite{the-logic-of-knowledge-bases,evaluation-based-reasoning-with-disjunctive,tractable-reasoning-in-first-order-knowledge})
is established by setting up a bijection between names to show that all names other than those that appear in the finite grounding in the RHS behave ``identically,'' and so for entailment purposes, it suffices to consider a finite set consisting of the constants already mentioned and a few extra ones. That idea can be traced back to \cite{a-completeness-result-for-reasoning-with} (reformulated here for our purposes): 

\begin{theorem} \cite{a-completeness-result-for-reasoning-with} Suppose \( \alpha = \forall x \phi(x) \) is a \( \forall \)-clause. (Its rank is 1.) Let \( C  \) be the names mentioned in \( \gnd(\alpha,1) \). Then for every \( a \in \U \),  there is a \( b \in C \) such that 	\( \models \phi(a) \) iff \( \models \phi(b) \). 
	
\end{theorem}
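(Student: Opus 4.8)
The plan is to reduce the claim to the observation that the validity of the ground formula $\phi(a)$ depends only on the ``equality type'' of the name $a$ relative to the finitely many names occurring in $\alpha$, and that $C$ contains a representative of each such type. Write $\phi(x) = e(x) \supset c(x)$ and let $a_1, \ldots, a_m$ be the names mentioned in $\alpha$ (the constants appearing in $e$ and in $c$). Because equality is interpreted as identity, the only way the substituted name can interact atomically with these constants is through the equalities $x = a_i$ occurring in $e$, and each such atom is decided purely by whether the substituted name coincides with $a_i$.

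First I would establish a renaming (isomorphism-invariance) lemma: for any permutation $\pi$ of $\U$, any model $M \in \M$, and any ground formula $\psi$, we have $M \models \psi$ iff $\pi(M) \models \pi(\psi)$, where $\pi(\psi)$ replaces every name $n$ by $\pi(n)$ and $\pi(M)$ is the model defined by $\pi(M) \models P(\vec c)$ iff $M \models P(\pi^{-1}\vec c)$. The only point needing care is the equality atoms: since $\pi$ is a bijection, $\pi(n) = \pi(n')$ iff $n = n'$, so $\pi$ preserves the identity interpretation of $=$; the predicate and connective cases are a routine induction. Because $M \mapsto \pi(M)$ is itself a bijection on $\M$, taking $\psi = \phi(a)$ gives $\models \phi(a)$ iff $\models \pi(\phi(a))$.

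Next I would classify names by equality type. Since $a_1, \ldots, a_m$ are distinct names, any $a \in \U$ either equals exactly one $a_i$ or is distinct from all of them, so there are at most $m+1$ types. For the first $m$ types the representative is $a_i$ itself; for the ``generic'' type (distinct from every $a_i$) the representative is the fresh name adjoined in forming $\gnd(\alpha, 1)$, which by construction lies outside $\set{a_1, \ldots, a_m}$ and belongs to $C$. Given $a$, let $b \in C$ be the representative of its type and let $\pi$ be the transposition swapping $a$ and $b$ (the identity if $a = b$). Crucially, because $a$ and $b$ share the same equality type, neither is any $a_i$ unless $a = b = a_i$; hence $\pi$ fixes every name of $\alpha$. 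Consequently $\pi$ fixes all constants of $\phi$ and sends the substituted name $a$ to $b$, so $\pi(\phi(a)) = \phi(b)$. Combined with the renaming lemma this yields $\models \phi(a)$ iff $\models \phi(b)$, as required.

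The main obstacle is bookkeeping rather than depth: one must check that the chosen permutation fixes all of $\alpha$'s constants, which is exactly what forces $b$ to be drawn from the same equality type as $a$, and that $C$ genuinely realizes the generic type, i.e.\ contains a name distinct from all of $a_1, \ldots, a_m$. This latter point is precisely the role of the single extra name in $\gnd(\alpha, 1)$: with $m$ named constants there are $m+1$ equality types, and the $m$ constants together with one fresh name form a complete set of representatives. The same counting explains why rank $1$ calls for exactly one extra name, foreshadowing the rank-$z$ grounding used in $\ou(\cdot)$.
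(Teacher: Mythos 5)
Your proof is correct and follows exactly the route the paper attributes to this result: it notes that the proof ``is established by setting up a bijection between names to show that all names other than those that appear in the finite grounding \ldots behave identically,'' which is precisely your permutation-invariance lemma combined with the transposition swapping $a$ with the representative of its equality type. The only detail worth keeping in mind is that the statement implicitly reads $C$ as the full substitution set $Z$ (the names of $\alpha$ plus the one fresh name), which is how you use it when you take the fresh name as the representative of the generic type.
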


The essence of Theorem \ref{thm grounding trick} is to exploit this idea to show (reformulated here for our purposes): 

\begin{lemma}\label{lem extending grounding}  \cite{open-universe-weighted-model-counting} Suppose \( \alpha \) is as above.  If \( \gnd(\alpha,1) \) is satisfiable, then so is \( \gnd(\alpha,z) \) for  \( z\geq 1. \) 
	
\end{lemma}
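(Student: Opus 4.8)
The plan is to treat this as a model-construction problem: assume a propositional model $M_1 \models \gnd(\alpha,1)$ and build from it a model $M_z \models \gnd(\alpha,z)$, exploiting the symmetry among the fresh names exactly as the cited bijection idea suggests. Since $\alpha = \forall x\,\phi(x)$ has rank $1$, it mentions a single variable $x$, so I may assume without loss of generality that the fresh name used in $\gnd(\alpha,1)$ is $n_1$ and that $\gnd(\alpha,z)$ uses the fresh names $n_1,\dots,n_z$ (the particular choice of new names being immaterial up to renaming). Writing $G$ for the set of names mentioned in $\alpha$, I would first split the ground clauses of $\gnd(\alpha,z)$ into those arising from substitutions $x\mapsto a$ with $a\in G$ --- which coincide with the corresponding clauses of $\gnd(\alpha,1)$ --- and those arising from $x\mapsto n_i$.

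The two structural observations that drive everything are consequences of rank $1$. First, because $e$ is built only from acceptable equalities $x=a$ with $a\in G$, the guard has the same truth value under $x\mapsto n_i$ as under $x\mapsto n_1$ (every fresh name falsifies each $x=a$), so $c[x\mapsto n_i]\in\gnd(\alpha,z)$ precisely when $c[x\mapsto n_1]\in\gnd(\alpha,1)$, and these two clauses are related by the name-swap $\pi_i$ that transposes $n_1$ and $n_i$ and fixes everything else. Second, since there is only one variable, each ground clause --- hence each atom occurring in $\gnd(\alpha,z)$ --- contains at most one fresh name. I would record this as a partition of the atoms of $\gnd(\alpha,z)$ into $A_0$ (atoms with no fresh name) and pairwise-disjoint sets $A_i$ (atoms mentioning $n_i$).

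Given this partition I would define $M_z$ by agreeing with $M_1$ on $A_0$, setting $M_z(p)=M_1(\pi_i(p))$ for $p\in A_i$, and extending arbitrarily on the remaining atoms. Well-definedness is immediate because the $A_i$ are disjoint from one another and from $A_0$. Verification then splits along the clause partition: a clause from $x\mapsto a$ with $a\in G$ lies over $A_0$ and already belongs to $\gnd(\alpha,1)$, so $M_z$ satisfies it because $M_z$ and $M_1$ coincide on $A_0$; a clause $c[x\mapsto n_i]$ has all its atoms in $A_i\cup A_0$ and equals $\pi_i(c[x\mapsto n_1])$, and since $\pi_i$ fixes the $A_0$-atoms while $M_z(p)=M_1(\pi_i(p))$ on $A_i$, we get $M_z\models c[x\mapsto n_i]$ iff $M_1\models c[x\mapsto n_1]$, the latter holding by hypothesis. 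As $c$ is a disjunction of atoms, this last equivalence is routine.

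The hard part will be pinning down the consistency of the construction rather than any single clause check: the sets $A_i$ overlap the shared, fresh-name-free atoms $A_0$, and one must be sure the $z$ independent ``copies'' of the local solution around each $n_i$ do not impose conflicting demands on those shared atoms. This is exactly where rank $1$ is indispensable --- it guarantees at most one fresh name per clause, so the $A_i$ meet only inside $A_0$, and there every copy agrees with $M_1$ because $\pi_i$ fixes non-fresh names. I would therefore foreground the rank-$1$ hypothesis when establishing the atom partition, since without it an atom could couple two distinct fresh names and the clean superposition of local models would break down.
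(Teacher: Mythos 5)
Your construction is correct and is precisely the bijection-between-names argument that the paper (following the cited source) says underlies this lemma: since the fresh names behave identically, the rank-$1$ hypothesis lets you superpose $z$ copies of the single-fresh-name solution via the transpositions $\pi_i$, with the copies interacting only on the fresh-name-free atoms where they all agree with $M_1$. The paper merely cites the result and gestures at this symmetry idea, so your write-up is a faithful elaboration of the same approach rather than a different route.
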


Thus, we can extend a model that satisfies \( \gnd(\alpha,1) \) to one that satisfies \( \gnd(\alpha) \), and so \( \alpha \) itself. These observations will now lead to an appealing account for  \mathit{implicit learnability} with  \( \proper \) KBs.


\section{Generalizing PAC-Semantics} 
\label{sub:pac_semantics}

Inductive generalization (as opposed to deduction) 
inherently has to cope with mistakes. Thus, the kind of knowledge produced by learning
algorithms cannot hope to be valid in the traditional (Tarskian) sense, except in extreme cases, such as assuming we see every data point in a noise-free manner. The PAC semantics was introduced by Valiant~\shortcite{valiant2000robust} to 
capture the quality possessed by the output of PAC-learning algorithms when 
formulated in a logic. In the classical propositional formulation, we suppose a propositional language with (say) \( n \) propositions, yielding a model theoretic space $\{0,1\}^n$. We suppose  that we observe examples independently
drawn from a distribution \( D \) over $\{0,1\}^n$. Then, suppose further that these examples enable a learning algorithm to find a formula \( 
\phi \). We cannot expect this formula to be valid in 
the traditional sense, as PAC-learning does not guarantee that the rule
holds for every possible binding, only that \( 
\phi \) so
produced agrees with probability $1-\epsilon$ wrt  future 
examples drawn from the same distribution. This motivates a weaker notion of validity: 

\begin{definition}[$(1-\epsilon)$-valid]
Given a distribution $D$ over $\{0,1\}^n$, we say that a Boolean function
$F$ is {\em $(1-\epsilon)$-valid} if $\Pr_{x\in D}[F(x)=1]\geq 1-\epsilon$.
If $\epsilon=0$, we say $F$ is {\em perfectly valid}.
\end{definition}

Thus far, the PAC semantics and its application to the formalization of robust logic-based learning has been limited to the propositional setting \cite{valiant2000robust,michael2009reading,juba2013implicit}, that is, where the learning vocabulary is finitely many atoms, and the background knowledge is essentially restricted to a propositional formula.\footnote{%
Valiant~\shortcite{valiant2000robust} uses a fragment of FOL for which propositionalization is guaranteed to yield a small propositional formula, and only considers such a reduction to the propositional case.} Generalizing that to the FOL case has to address, among other things, what $(1-\epsilon)$-validity would like, how FOL formulas could be found by algorithms, and finally, how entailments can be computed. That is precisely our goal for this paper.

We start by proposing an extension of the PAC semantics for the infinitary structures constructed for \( 
\L \), namely \( \M. \) For this, we will need to consider distributions on \( \M \), which are defined as usual \cite{probability-and-measure}: we take \( \M \) to be the sample space (of elementary events), define a \( \sigma \)-algebra \( {\bf M} \) to be a set of subsets of \( \M \), which represent a  collection of (not necessarily elementary) events, and a function \( \Pr\colon {\bf M} \rightarrow [0,1] \), which is the probability measure. 

%
%


We are now ready to define \( (1-\epsilon) \)-validity as needed in the PAC semantics. 

\begin{definition} Given a  distribution \( \Pr \) over \( \M, \) we say a formula \( \phi \in \L \) is \( (1-\eps) \)-valid iff \( \Pr(\abs \phi) \geq 1 - \eps. \)  If \( \eps = 0, \) then we say that \( \phi \) is perfectly valid. Here,  \( \abs \phi \) for any closed formula \( \phi\in \L \) denotes the set \( \set{M\in \M \mid M \models \phi}. \) 
	
\end{definition}

In practice, the most important use of the notion of validity is to check the entailment of a formula from a knowledge base, and by extension, the reader may wonder how that carries over from classical validity. As also observed in \cite{juba2013implicit} (for the propositional case), the union bound allows classical reasoning to have a natural analogue in the PAC semantics, shown below. {Note that, as already mentioned, our assumption henceforth is that knowledge bases are \( \proper \), and queries are ground formulas, both in the context of reasoning as well as learning.} 

\begin{proposition}
\label{classical-inf-bound}
Let $\psi_1,\ldots,\psi_k$ be $\forall$-clauses such that each $\psi_i$ is
$(1-\epsilon_i)$-valid under a common distribution $D$ for some $\epsilon_i\in
[0,1]$. Suppose  $\{\psi_1,\ldots,\psi_k\}\models\varphi$, for some ground formula $\varphi$. Then $\varphi$ is $(1-\epsilon')$-valid under $D$ for
$\epsilon'=\sum_i\epsilon_i$.
\end{proposition}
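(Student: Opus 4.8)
The plan is to reduce this to the classical entailment fact already available via Theorem~\ref{thm grounding trick}, together with a measure-theoretic union bound on the complement events. The key observation is that $(1-\epsilon_i)$-validity of $\psi_i$ means the ``bad'' event $\abs{\neg\psi_i} = \M \setminus \abs{\psi_i}$ has measure at most $\epsilon_i$ under $D$. So first I would record that $\Pr(\abs{\neg\psi_i}) = 1 - \Pr(\abs{\psi_i}) \leq \epsilon_i$ for each $i$, which requires only that $\abs{\psi_i}$ is a measurable event in the $\sigma$-algebra ${\bf M}$ (a standing assumption of the framework, so that $(1-\epsilon_i)$-validity is even well-defined).

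Second, I would use the classical semantic entailment $\{\psi_1,\ldots,\psi_k\} \models \varphi$ to argue a pointwise containment of model sets: every model $M$ satisfying all of $\psi_1,\ldots,\psi_k$ must satisfy $\varphi$. Contrapositively, $\abs{\neg\varphi} \subseteq \bigcup_{i=1}^k \abs{\neg\psi_i}$, since any $M \in \abs{\neg\varphi}$ fails $\varphi$ and hence cannot satisfy all the $\psi_i$, so it must fail at least one. This is the step where Theorem~\ref{thm grounding trick} is quietly doing the work: it guarantees that classical entailment over the infinitary structures $\M$ is a genuine model-containment statement (reducible to a finite propositional unsatisfiability check), so the set-theoretic inclusion among the $\abs{\cdot}$ events holds without any subtlety about the infinite domain.

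Third, I would apply the union bound (countable, or here just finite) subadditivity of the measure $\Pr$ to the inclusion from the previous step:
\[
\Pr(\abs{\neg\varphi}) \leq \Pr\Bigl(\bigcup_{i=1}^k \abs{\neg\psi_i}\Bigr) \leq \sum_{i=1}^k \Pr(\abs{\neg\psi_i}) \leq \sum_{i=1}^k \epsilon_i = \epsilon'.
\]
Taking complements gives $\Pr(\abs{\varphi}) = 1 - \Pr(\abs{\neg\varphi}) \geq 1 - \epsilon'$, which is exactly $(1-\epsilon')$-validity of $\varphi$ under $D$, as desired.

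The only genuine obstacle, and the reason this is not entirely immediate from the propositional argument in \cite{juba2013implicit}, is measurability: the union bound presupposes that each $\abs{\neg\psi_i}$ and $\abs{\neg\varphi}$ are events in ${\bf M}$, and that the finite union of such events is again measurable. Finiteness of the union makes the latter automatic for any $\sigma$-algebra, so the substantive point is only that the individual $\abs{\cdot}$ sets are measurable --- which we must take as part of the well-definedness of $(1-\epsilon)$-validity for the $\forall$-clauses and the ground query $\varphi$. I would flag this explicitly but not belabor it, since the whole PAC-semantics setup over $\M$ presumes the relevant formula-events lie in the chosen $\sigma$-algebra. Everything else is routine manipulation of complements and subadditivity.
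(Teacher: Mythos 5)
Your proof is correct and is exactly the argument the paper has in mind: the paper states this proposition without an explicit proof, attributing it to ``the union bound'' as in the propositional case of \cite{juba2013implicit}, and your containment $\abs{\neg\varphi}\subseteq\bigcup_i\abs{\neg\psi_i}$ followed by finite subadditivity (plus the measurability caveat) is that argument spelled out. One small correction: Theorem~\ref{thm grounding trick} is not doing any work here --- the containment follows directly from the definition of semantic entailment over $\M$ (every model satisfying all $\psi_i$ satisfies $\varphi$), and the grounding trick is only needed later to make such entailments \emph{decidable}, not to make them model-containment statements.
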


%
%
%


\section{Partial Observability} 
\label{sub:partial_observability}

The learning problem of interest here is to obtain knowledge about the distribution \( D \), which, of course, is not revealed directly, but in the form of a set of examples. The examples in question are models independently drawn from \( D \), and we are then interested in knowing whether a query \( 
\alpha \) is \( (1-\epsilon) \)-valid. Intuitively, background knowledge \( \Delta \) may be provided additionally and so the examples correspond to additional knowledge that the agent learns. This additional knowledge is never materialized in the form of \( \L \)-formulas, but is left implicit, as postulated first in \cite{juba2013implicit}. 

When it comes to the examples themselves, however, we certainly cannot expect the examples to reveal the full nature of the world, and indeed, partial descriptions are commonplace in almost all  applications \cite{michael2010partial}. In the case of \( 
\L \), moreover,  providing a full description may even be   impossible in finite time. All of this motivates the following: 


\begin{definition} A partial model \( N \) maps \( \atoms \) to \( \set{1,0,*}. \) We say \( N \) is consistent with a \( \L \)-model \( M \) iff for all \( p\in \atoms, \) if \( N[p] \neq * \) then \( N[p] = M[p] \). Let \( \N \) be the set of all partial models. 
	
\end{definition}

Essentially, our knowledge of \( D \) will be obtained from a set of partial models that are the examples. 

\begin{definition} A mask is a function \( \theta \) that maps \( \L \)-models  to  partial models, with the property that for any \( M\in \M, \) \( \theta(M) \) is consistent with \( M. \) A {\em masking
process} $\Theta$ is a mask-valued random variable (i.e., a random function).  
We denote the distribution over partial models  obtained by applying a
masking process $\Theta$ to a distribution $D$ over \( \L \)-models by $\Theta(D)$.
	
\end{definition}

The definition of masking processes allows the hiding of entries to 
depend on the underlying example from $D$. Moreover,  {as discussed in \cite{juba2013implicit} (for the propositional case), reasoning in PAC-Semantics from complete examples is trivial, whereas the hiding of all entries by a masking process means that the problem reduces to classical entailment.} So, we expect examples to be of a sort that is in between these extremes. In particular, for the sake of tractable learning, we must consider formulas that
can be evaluated efficiently from the partial models with 
high probability. This leads to a notion of {\it witnessing}.

\begin{definition} We define a propositional formula \( \phi\in \L \) to be witnessed to evaluate to true or false in a partial assignment \( N \) by induction as follows: \begin{itemize}
	\item an atom \( Q(\vec c) \) is witnessed to be true/false iff it is true/false respectively in \( N \); 
	\item \( \neg\phi \) is witnessed true/false iff \( \phi \) is witnessed false/true respectively;
	\item \( \phi \vee \psi \) is witnessed true iff either \( \phi \) or \( \psi \) is, and it is witnessed false iff both \( \phi \) and \( \psi \) are witnessed false;
	\item \( \phi \wedge \psi \) is witnessed true iff both \( \phi \) and \( \psi \) are witnessed true, and it is witnessed false iff either \(\phi \) or \( \psi \) is witnessed false;
	\item \( \phi \supset \psi \) is witnessed true iff either \( \phi \) is witnessed false or \(\psi\) is witnessed true, and it is witnessed false iff both \(\phi\) is witnessed true and \(\psi\) is witnessed false.


\end{itemize} We define a \( \forall \)-clause \( \forall\vec{x}\phi(\vec{x}) \) to be witnessed true in a partial model \( N\) for the set of names \( C \) if
for every binding of $\vec{x}$ to names $\vec{c}\in C$, the resulting ground clause $\phi(\vec{c})$ is witnessed true in $N$.

	
\end{definition}

It is the witnessing of \( \forall \)-clauses that, in essence, enables the implicit learning of quantified generalizations. Let us see how that works. Intuitively, from examples \( \phi(\vec c \sub 1), \ldots, \) one would like  to generalize to \( \forall \vec x \phi(\vec x) \), the latter being a statement about infinitely many objects. But what criteria would justify this generalization, outside of (say) witnessing infinitely many instances? Our result shows that, surprisingly, it suffices to get finitely many examples, so as to witness \( \phi(\vec c \sub 1), \ldots, \phi(\vec c \sub k) \) and yield universally quantified sentences with high probability. 
This is possible 
because, via Theorem \ref{thm grounding trick}, all the names not mentioned in the KB and the query behave ``identically.'' Thus, provided we witness the grounding of \( \phi \) for a sufficient but finite set of constants, we can treat the implicit KB as including  \( \forall \)-clauses, as it yields the same judgments on our queries.


Putting it all together, formally, in any given learning epoch, let \( S \) be the class of queries we are interested in asking: that is, \( S \) is any finite set of ground formulas. Let \( C \) then be all the names mentioned in \( S \), the KB, and \( z \) extra new ones chosen arbitrarily, where \( z \) is at least the rank of the KB. If \( z = \) KB's rank, then the rank of the implicit KB matches that of the explicit KB; otherwise, it would be higher. So the definition says that the witnessing of \( \forall \vec x \phi(\vec x) \) happens when \( \phi(\vec c) \) is witnessed for all \( \vec c \in C \). We think this  notion is particularly powerful, as it neither makes  references to bindings from the full set of names \( \U \) (which is infinite), nor to not observing negative instances. Note also that witnessing does not require observing all atoms: a clause is witnessed to evaluate to true if some literal appearing in it is true in the partial model. Thus, the \( \forall \)-clause witnessed may involve predicates not explicitly appearing in the partial model.


Witnessed formulas correspond to the \mathit{implicit} KB. In order to capture the inferences that the implicit KB permits, we will use partial models to simplify  complex formulas in  the KB or  query. To that end, we define:


\begin{definition} 
Given a partial model $N$ and a propositional formula $\phi$, the {\em restriction of 
$\phi$ under $N$,} denoted $\phi|_N$, is recursively defined: if $\phi$ is an atom witnessed in $N$, then $\phi|_N$ is the  value that $\phi$ is witnessed to evaluate to under $N$; if $\phi$ is an atom not set by $N$, then $\phi|_N=\phi$; if $\phi=\neg\psi$, then $\phi|_N
=\neg(\psi|_N)$; and if \( \phi = \alpha \land \beta \), then \( \phi|_N = (\alpha|_N) ~\land~ (\beta|_N) \). (And analogously for Boolean connectives \( \lor \) and \( \supset. \))
For a partial model $N$ and set of propositional formulas $F$, we let $F|_N$ denote the
set $\{\phi|_N:\phi\in F\}$.
\end{definition}

Notice that here we do not define restrictions for quantified formulas, such as those appearning in the KB: while that is possible it is not needed, as we will be leveraging Theorem \ref{thm grounding trick} for reasoning.

\section{Implicit Learnability} 
\label{sec:implicit_learnability}

The central motivation here is learning to reason in FOL, and as argued earlier, implicit learning circumvents the need for an explicit hypothesis, especially since hypothesis fitting is intractable, unless one   severly restricts the hypothesis space. So, learning is integrated tightly into the application using the knowledge extracted
from data. Our definitions in the previous sections establish the grounds for which a first-order implict KB can be learned from finitely many finite-size examples, but also the grounds for deciding propositional entailments of \( \forall \)-clauses specified explicitly -- i.e., the background knowledge. (Of course, reasoning is not yet tractable, but simply decidable; we return to this point later). Overall, the learning regime is presented in Algorithm \ref{fo-l2r-alg}, and its correctness is justified in Theorem \ref{mainthm}. 


\begin{algorithm}[!ht]
\caption{Reasoning with implicit learning}\label{fo-l2r-alg}
\begin{algorithmic}
\STATE {\bf Input:} Partial models $N^{(1)},N^{(2)},\ldots,N^{(m)}$, explicit KB $\kb$, query $\alpha$ (a ground formula), 
number of names $k$ at least equal to $\kb$'s rank
\STATE {\bf Output:} $\hat{p}\in [0,1]$ estimating $\alpha$ is $\hat{p}$-valid (See Theorem~\ref{mainthm})
\STATE Initialize $v\gets 0$
\FOR{$i=1,\ldots,m$}
  \FOR{ all $k$-tuples of names $(c_1,\ldots,c_k)$ from $N^{(i)}$ {\em not} appearing in $\kb \land \neg \alpha$}
     \IF{\(\gnd(\Delta \land \neg\alpha,\{c_1,\ldots,c_k\})|_{N^{(i)}} \) is unsatisfiable}
       \STATE Increment $v$ and skip to the next $i$.
     \ENDIF
  \ENDFOR
\ENDFOR
\STATE Return $v/m$
\end{algorithmic}
\end{algorithm}

\begin{theorem}\label{mainthm}
Let $\delta,\gamma\in (0,1)$ and $k\in\mathbb{N}$ be given.
Suppose we have $m$ partial models drawn i.i.d.\ from a common 
distribution $D$ masked 
by a masking process $\Theta$, where $m\geq \frac{1}{2\gamma^2}\ln\frac{2}{\delta}$.  (Here, $\ln$ denotes the natural logarithm.)
With probability at least $1-\delta$, Algorithm~\ref{fo-l2r-alg} returns a value $\hat{p}$ s.t. 
\begin{itemize}
    \item[{\bf I}] if $\kb\supset\alpha$ is at most $p$-valid, $\hat{p}\leq p+\gamma$
    \item[{\bf II}] if there is a KB $\I$ 
    such that
    \begin{enumerate}
        \item $\kb\wedge\I\models\alpha$, 
          \item the rank of $\kb\land \I$  is at most $k$,  and
        \item with probability at least $p$ over partial models $N\in\Theta(D)$, there exists names $c_1,\ldots,c_k$ not appearing in $\kb$ or $\alpha$, such that every formula in $\I$ is witnessed true in $N$ for $c_1,\ldots,c_k$ together with the names appearing in $\kb$ and $\alpha$
    \end{enumerate}
    then $\hat{p}\geq p-\gamma$.
\end{itemize}
\end{theorem}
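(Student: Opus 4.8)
The plan is to reduce both parts to a single concentration argument and then analyze the event that the algorithm ``counts'' a partial model. For a partial model $N$, say $N$ is \emph{counted} if there is some admissible $k$-tuple $(c_1,\ldots,c_k)$ of names from $N$ (none appearing in $\Delta\land\neg\alpha$) for which $\gnd(\Delta\land\neg\alpha,\{c_1,\ldots,c_k\})|_N$ is unsatisfiable; this is exactly the event that causes the inner loop to increment $v$ and skip ahead. Let $q=\Pr_{N\sim\Theta(D)}[N\text{ is counted}]$. Since the algorithm returns $\hat p=v/m$, where $v$ is the number of the $m$ i.i.d.\ samples that are counted, $\hat p$ is the empirical mean of $m$ independent Bernoulli$(q)$ variables. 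Hoeffding's inequality gives $\Pr[|\hat p-q|>\gamma]\le 2e^{-2\gamma^2 m}$, which is at most $\delta$ by the hypothesis $m\ge\frac{1}{2\gamma^2}\ln\frac 2\delta$. Thus with probability at least $1-\delta$ we have $q-\gamma\le\hat p\le q+\gamma$, and it remains to show $q\le p$ under the hypothesis of (I) and $q\ge p$ under the hypothesis of (II).

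For part (I), I would show that being counted is \emph{sound}: if $N=\Theta(M)$ is counted then $M\models\Delta\supset\alpha$. Fixing the successful tuple, note that $\gnd(\Delta\land\neg\alpha,\{c_1,\ldots,c_k\})$ is a finite set of ground instances of the clauses of $\Delta$ together with $\neg\alpha$, each a semantic consequence of $\Delta\land\neg\alpha$; hence any model of $\Delta\land\neg\alpha$ satisfies this grounding. Because $N$ is consistent with $M$ (the masking property), restriction under $N$ is truth-preserving for $M$, so $M$ satisfies the grounding iff it satisfies $\gnd(\Delta\land\neg\alpha,\{c_1,\ldots,c_k\})|_N$. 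Since the latter is unsatisfiable, $M$ cannot satisfy the grounding, and therefore $M\not\models\Delta\land\neg\alpha$, i.e.\ $M\models\Delta\supset\alpha$. Taking probabilities over $M\sim D$ gives $q\le\Pr(\abs{\Delta\supset\alpha})\le p$, and combined with $\hat p\le q+\gamma$ this yields $\hat p\le p+\gamma$.

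For part (II), I would show the dual \emph{completeness} statement: whenever the event of condition~3 holds for $N$ with witnessing tuple $c_1,\ldots,c_k$, then $N$ is counted. Let $C=\{c_1,\ldots,c_k\}\cup\text{names}(\Delta,\alpha)$. Since $\Delta\land\I\models\alpha$ (condition~1) and the rank of $\Delta\land\I$ is at most $k$ (condition~2), Theorem~\ref{thm grounding trick}---with Lemma~\ref{lem extending grounding} used to permit $k\ge$ the rank new names rather than exactly the rank---tells us that $\gnd(\Delta\land\I\land\neg\alpha,\{c_1,\ldots,c_k\})$ is unsatisfiable, the $c_i$ serving as the required new names. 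Restriction under $N$ preserves unsatisfiability. This grounding splits into $\gnd(\Delta\land\neg\alpha,\{c_1,\ldots,c_k\})$ together with the ground instances of $\I$ over bindings from $C$; by condition~3 every clause of $\I$ is witnessed true in $N$ over $C$, so each such instance restricts to \emph{true} and drops out. What remains is exactly $\gnd(\Delta\land\neg\alpha,\{c_1,\ldots,c_k\})|_N$, which is therefore unsatisfiable, so $N$ is counted. Hence $q\ge\Pr[\text{condition 3 holds}]\ge p$, and with $\hat p\ge q-\gamma$ we get $\hat p\ge p-\gamma$.

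The two supporting facts I would isolate as lemmas are (i) that restriction under a consistent partial model preserves truth for any compatible $M$ and preserves propositional (un)satisfiability, and (ii) that a clause witnessed true in $N$ restricts to true. The step I expect to be the main obstacle is the name bookkeeping in part (II): one must verify that the algorithm's grounding set $\text{names}(\Delta,\alpha)\cup\{c_1,\ldots,c_k\}$ is precisely a ``names-mentioned-plus-$k$-new'' set to which the grounding trick applies to $\Delta\land\I\land\neg\alpha$. This requires that $\I$ introduce no constants outside $\text{names}(\Delta,\alpha)$, so that $\text{names}(\Delta\land\I\land\neg\alpha)=\text{names}(\Delta,\alpha)$, and that the $c_i$ are genuinely new; the former can be assumed without loss of generality via the interchangeability of unmentioned names underlying Theorem~\ref{thm grounding trick}, and the latter is guaranteed since the $c_i$ avoid $\Delta$ and $\alpha$. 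Getting this alignment exactly right, so that the formula the algorithm actually tests coincides with the one the grounding trick certifies unsatisfiable, is the crux of the argument.
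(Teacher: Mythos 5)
Your proof is correct and follows essentially the same route as the paper's: soundness of the unsatisfiability test via consistency of $N^{(i)}$ with the underlying model that produced it, and completeness via the witnessed groundings of $\I$ restricting to true so that $\gnd(\kb\land\I\land\neg\alpha,\cdot)|_{N}$ collapses to $\gnd(\kb\land\neg\alpha,\cdot)|_{N}$. The only difference is organizational---you apply Hoeffding once to the exact ``counted'' event and sandwich its probability $q$ between $p$ on both sides, whereas the paper applies Hoeffding separately to an over-estimating and an under-estimating event and takes a union bound; both give the same sample complexity, and your explicit flagging of the name-bookkeeping for $\I$ is a point the paper itself leaves implicit.
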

\begin{proof}
{\bf Part I: $\hat{p}\leq p+\gamma$ if $\kb\supset\alpha$ is at most $p$-valid.}
We first note that when $\gnd(\kb\land\neg\alpha,C)|_{N^{(i)}}\models\bot$ for any set of names $C$, since
$N^{(i)}$ is consistent with the actual model $M^{(i)}$ that produced it,
$\gnd(\kb\land\neg\alpha,C)|_{M^{(i)}}\models\bot$ as well. Thus, in this case,
$\gnd(\kb\land\neg\alpha,C)$ is falsified by $M^{(i)}$. Since $|C|$ is at least the rank of $\kb$, it is easy to see that $\gnd(\kb \land \neg \alpha)$, which is logically equivalent to $\kb \land \neg \alpha$, is falsifiable at $M^{(i)}$. So, it must be that the negation of that theory (i.e., $\kb\supset\alpha$) is satisfied at $M^{(i)}$. 




Now, $\kb\supset\alpha$ is by definition $p$-valid with respect to this distribution on $M^{(i)}$ if the probability that $\kb\supset\alpha$ is satisfied by each $M^{(i)}$ is $p$. Moreover, it follows immediately from Hoeffding's inequality that for $m\geq\frac{1}{2\gamma^2}\ln\frac{2}{\delta}$, the probability that the fraction of times $\kb\supset\alpha$ is satisfied by $M^{(i)}$ (out of $m$) exceeds $p$ by more than $\gamma$ is at most $\delta/2$. Thus, $\hat{p}$, which is at most the fraction of times $\kb\supset\alpha$ is actually satisfied by $M^{(i)}$, likewise is at most $p+\gamma$ with probability at least $1-\delta/2$.

\noindent
{\bf Part II: rate of witnessing an implicit KB lower bounds $\hat{p}$.}
%
%
Note that by the grounding trick (Theorem~\ref{thm grounding trick}), $\kb\land\I\models\alpha$ implies that for any set of names $c_1,\ldots,c_k$ not appearing in $\kb$ or $\alpha$, $\gnd(\kb\land\I\land\alpha,\{c_1,\ldots,c_k\})\models\bot$.
Suppose that $\I$ is witnessed true for $c_1,\ldots,c_k$ together with the names in $\kb$ and $\alpha$ in $N^{(i)}$. We note that in the restricted formula $\gnd(\kb\land\I\land\neg\alpha,\{c_1,\ldots,c_k\})|_{N^{(i)}}$, the groundings of formulas in $\I$ all simplify to $1$ (true), and so $\gnd(\kb\land\I\land\neg\alpha,\{c_1,\ldots,c_k\})|_{N^{(i)}} = \gnd(\kb\land\neg\alpha,\{c_1,\ldots,c_k\})|_{N^{(i)}}$. Thus, $\gnd(\kb\land\neg\alpha,\{c_1,\ldots,c_k\})|_{N^{(i)}}\models\bot$, so $v$ is incremented on this iteration. Thus, indeed, $\hat{p}=v/m$ is at least the fraction of times out of $m$ that $\I$ is witnessed true for some set of $k$ names. It again follows from Hoeffding's inequality that for $m\geq\frac{1}{2\gamma^2}\ln\frac{2}{\delta}$, this is at least $p-\gamma$ with probability $1-\delta/2$. 

By a union bound, the two parts hold simultaneously with probability at least
$1-\delta$, as needed. \qed 
\end{proof}

In essence, the no-overestimation condition is a {\it soundness} guarantee and the no-underestimation condition is a limited {\it completeness} guarantee:  in other words, if the query logically follows from the explicit KB and examples then the algorithm returns success with an appropriate $\hat{p}$, and vice versa. 


\section{Tractable Reasoning} 
\label{sec:reasoning}
Algorithm~\ref{fo-l2r-alg} reduces reasoning with implicit learning to deciding entailment. In order to obtain a tractable algorithm, we generally need to restrict the reasoning task somehow. One approach, taken in the previous work on propositional implicit learning \cite{juba2013implicit}, is to ``promise'' that the query is provable in some low-complexity fragment; for example, it is provable by a small treelike resolution proof (where ``small'' refers to the number of lines of the proof). Equivalently, we give up on completeness, and only seek completeness with respect to conclusions provable in low complexity in a given fragment. In general, then, one obtains a running time guarantee that is parameterized by the size of the proof of the query. We can take a similar approach here, by using an algorithm for deciding entailment that is efficient when parameterized in such terms. In general, what is needed is a fragment for which we can decide the existence of proofs efficiently, and that is ``restriction-closed,'' meaning that for any partial model $N$, if we consider the restriction of each line of the proof, we obtain a proof in the same fragment. Most fragments we might consider, including specifically treelike or bounded-width resolution, are restriction-closed. (See \cite{juba2012learning} for details.) 

\newcommand{\bel}{\mathitbf{B}}

We will motivate an entirely new strategy here, 
which offers a semantic perspective to the proof-theoretic view in \cite{juba2013implicit}. One classically sound model-theoretic approach to constraining propositional reasoning is to limit the power of the reasoner, as represented, for example, by the work on tautological entailment \cite{a-logic-of-implicit-and-explicit-belief}. More recently, \cite{a-logic-of-limited-belief-for-reasoning} suggest a simple evaluation scheme for $\prop$ KBs that gradually increases the power of the reasoner: level $0$ is standard database lookup together with unit propagation, level $1$ allows for one case split in a clause, level $2$ allows two case splits, and so on. The formal intuition is as follows: suppose \( s \) is a set of ground clauses and \( \phi \) is a ground  query, and let us say its a clause for simplicity. 
Let \( \UU(s) \) denote the the closure of $s$ under unit propagation, defined as the least set \( s' \) satisfying: (a) \( s\subseteq s' \) and (b) if literal \( l\in s' \) and \( (\neg l \lor c) \in s' \) then \( c\in s'. \) Then let \( \V(s) \) define all possible weakenings: \( \set{ c \mid c \textrm{ is a ground clause and there is a } c'\in \UU(s) \textrm{ s.t. } c'\subseteq c }. \)
Then we define \( s \models \sub z  \phi \)  (read: ``entails at levels \( z \)") iff one of the following holds: \begin{itemize}
	\item {\it subsume:} \( z=0, \) and \( \phi \in \V(s) \); 
	\item {\it split:} \( z\gt 0 \) and there is some clause \( c\in s \) such that for all literals \( l\in c, \) \( s\cup \set{l} \models \sub {(z-1)} \phi. \)
\end{itemize}
This scheme is sound as well as tractable:



\begin{theorem}\label{thm liu}\cite{a-logic-of-limited-belief-for-reasoning} Suppose \( \kb, \phi \) are propositional formulas and \( z\in \nat \). Then,  determining if \( \kb \models\sub z \phi \) can be done in time \( O({(|\phi|\times |\kb|)}^{z+1}) \). Moreover, if \( \kb \models \sub z  \phi \) then \( \kb \models  \phi. \) 
	
\end{theorem}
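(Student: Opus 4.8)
The plan is to prove the two assertions---soundness and the running-time bound---separately, each by induction on the level $z$. For \textbf{soundness} ($\kb \models_z \phi$ implies $\kb \models \phi$), the base case $z=0$ combines two facts. First, every clause in $\UU(\kb)$ is a classical consequence of $\kb$: this follows by induction on the fixpoint construction of $\UU$, since its generating rule---from a unit $l$ and a clause $\neg l \lor c$, derive $c$---is exactly unit resolution, which is sound. Second, if $c' \subseteq \phi$ as sets of literals then $c' \models \phi$, as a clause entails each of its weakenings. Hence $\phi \in \V(\kb)$ yields some $c' \in \UU(\kb)$ with $\kb \models c'$ and $c' \models \phi$, so $\kb \models \phi$. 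For the inductive step ($z>0$, via the split rule) there is a clause $c \in \kb$ with $\kb \cup \set{l} \models_{z-1} \phi$ for every literal $l \in c$; the induction hypothesis upgrades each of these to $\kb \cup \set{l} \models \phi$. The conclusion then follows by a case split: any model $M$ of $\kb$ satisfies the clause $c$, hence $M \models l^*$ for some $l^* \in c$, so $M \models \kb \cup \set{l^*}$ and therefore $M \models \phi$; as $M$ was arbitrary, $\kb \models \phi$.

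For the \textbf{running-time bound} I would analyse the recursion tree induced by the definition of $\models_z$. Its depth is exactly $z$, and at each node the split rule is attempted by iterating over the clauses $c$ of the current clause set and, for each, over the literals $l \in c$, issuing one recursive call per literal occurrence. The number of children of a node is thus $\sum_c |c|$, the total number of literal occurrences, which is $O(|\kb|)$. The one subtlety requiring care is that each recursive call augments the clause set with a unit $\set{l}$; however, along any root-to-leaf path at most $z$ units are added, so for fixed $z$ the clause set remains of size $O(|\kb|)$ throughout, and the branching factor stays $O(|\kb|)$. The tree therefore has $O(|\kb|^{z})$ leaves, with the internal-node bookkeeping dominated by the leaf cost.

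It remains to bound the work at a leaf ($z=0$): computing $\UU$ of the (augmented) clause set and testing whether some derived clause subsumes $\phi$. Since every clause of $\UU(s)$ is a sub-clause of a clause of $s$, the closure can be computed in time polynomial (indeed linear, via a standard work-list propagation) in $|\kb|$, and the subsumption test---checking $c' \subseteq \phi$ for each derived $c'$---costs $O(|\phi| \times |\kb|)$. Multiplying the per-leaf cost $O(|\phi| \times |\kb|)$ by the $O(|\kb|^z)$ leaves gives $O(|\phi| \times |\kb|^{z+1}) \leq O((|\phi| \times |\kb|)^{z+1})$, as claimed. The main obstacle I anticipate is this complexity accounting rather than soundness: one must confirm that the unit-propagation closure together with the subsumption check fit within a single factor of $|\phi| \times |\kb|$, and that the units accumulated during recursion do not inflate the branching beyond $O(|\kb|)$. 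The soundness direction, by contrast, is a routine double induction whose only content is the unit-resolution soundness of $\UU$ and the case split over a clause of $\kb$.
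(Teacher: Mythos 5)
The paper does not actually prove this statement: it is imported verbatim (as Theorem~\ref{thm liu}) from the cited work on limited belief, so there is no in-paper argument to compare against. Judged on its own terms, your proof is correct and is essentially the standard argument one would give. The soundness half is right: unit propagation is unit resolution, subset-of-literals is semantic weakening, and the split rule is handled by the case analysis over the literals of a clause $c$ that every model of $\kb$ must satisfy; the double induction is exactly what is needed, and you correctly use that the augmented sets $\kb\cup\{l\}$ are what the induction hypothesis applies to. The complexity half is also sound: branching factor bounded by the number of literal occurrences $O(|\kb|)$ (plus the $\leq z$ accumulated units, which you rightly note do not change the asymptotics), depth $z$, and an $O(|\phi|\times|\kb|)$ leaf test, giving $O(|\phi|\times|\kb|^{z+1})$, which is even slightly tighter than the stated $O((|\phi|\times|\kb|)^{z+1})$. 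Two small points worth making explicit if you wanted this to be airtight: (i) you treat $\phi$ as a clause for the subsumption test, which matches the paper's own simplification (``let us say it's a clause for simplicity'') but not the literal statement that $\phi$ is an arbitrary propositional formula; and (ii) $\UU(s)$ as defined contains all intermediate sub-clauses and could in principle be large, so one should note, as you implicitly do, that only the fully reduced (minimal) clauses matter for the subsumption check and that these can be computed by a linear work-list propagation. Neither point is a gap in substance.
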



We will now see how to leverage these results. First, however, we need the equivalent to restriction-closed, as discussed above. 
\begin{proposition} Suppose \( \phi,\kb, z \) are as above. Then if \( \kb \models \sub z \phi \), and \( N \) is any partial model then \( (\kb |\sub N) \models \sub z (\phi |\sub N). \)
	
\end{proposition}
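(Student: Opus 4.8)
The plan is to prove the statement by induction on the level $z$, establishing that the relation $\models_z$ is closed under restriction by $N$ in exactly the way resolution proofs are. Throughout I would adopt the standard convention for restricting clauses: $c|_N$ drops the literals of $c$ that $N$ witnesses false, and if some literal of $c$ is witnessed true then $c$ becomes vacuous and is discarded from $\kb|_N$ (keeping it would wrongly license a split on a satisfied clause). Correspondingly, if $\phi|_N$ is witnessed true the conclusion is trivial, so the substantive case is when $\phi|_N$ is a genuine clause. I would also record the easy fact that $\kb\models_z\phi$ implies $\kb\models_{z'}\phi$ for all $z'\geq z$ (monotonicity in the level), proved by a side induction: split on any clause of $\kb$, which is non-empty whenever $\kb\models_0\phi$, since $\V(\emptyset)=\emptyset$.

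The heart of the argument, and where I expect the main obstacle to lie, is the base case $z=0$. Here $\kb\models_0\phi$ means $\phi\in\V(\kb)$, and I must produce $\phi|_N\in\V(\kb|_N)$. The enabling lemma is that restriction commutes with unit propagation: for every $c\in\UU(\kb)$, either $c|_N$ is witnessed true or $c|_N\in\V(\kb|_N)$. I would prove this by induction on the stage at which $c$ enters $\UU(\kb)$, writing $\UU(\kb)=\bigcup_n\UU_n$ with $\UU_0=\kb$ and $\UU_{n+1}=\UU_n\cup\{c:\ l\in\UU_n,\ (\neg l\lor c)\in\UU_n\}$. The base stage is immediate, as $c\in\kb$ gives $c|_N\in\kb|_N$. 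For the propagation step, where $c$ comes from a unit $l\in\UU_n$ and a clause $(\neg l\lor c)\in\UU_n$, I would case on how $N$ sets $l$: (i) if $N$ witnesses $l$ true, then $(\neg l\lor c)|_N=c|_N$ and the claim for $c$ is exactly the inductive hypothesis for $(\neg l\lor c)$; (ii) if $N$ witnesses $l$ false, then the restricted unit is the empty clause, so the empty clause lies in $\UU(\kb|_N)$ and $\V(\kb|_N)$ then contains every clause, in particular $c|_N$; (iii) if $N$ leaves $l$ unset, the inductive hypotheses give $l\in\V(\kb|_N)$ and $\neg l\lor(c|_N)\in\V(\kb|_N)$, whence a unit weakening places the unit $l$ in $\UU(\kb|_N)$, and one further unit-propagation step against the witness of $\neg l\lor(c|_N)$ yields $c|_N\in\V(\kb|_N)$. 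Case (iii) is the delicate one: restriction can shrink longer clauses into the very units that drive propagation, so the simulation must insert exactly this extra propagation step in $\kb|_N$. Given the lemma, the base case follows by taking the witness $c'\in\UU(\kb)$ with $c'\subseteq\phi$ and noting $c'|_N\subseteq\phi|_N$ whenever $\phi|_N$ is not witnessed true.

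For the inductive step with $z>0$, suppose $\kb\models_z\phi$ via a split on $c\in\kb$, so that $\kb\cup\{l\}\models_{z-1}\phi$ for every literal $l\in c$. If $c$ survives restriction, I would split $\kb|_N$ on $c|_N\in\kb|_N$: its literals are precisely the literals $l$ of $c$ left unset by $N$, and for each such $l$ the inductive hypothesis applied to $\kb\cup\{l\}\models_{z-1}\phi$ gives $(\kb\cup\{l\})|_N=\kb|_N\cup\{l\}\models_{z-1}\phi|_N$, which is exactly the premise the split at level $z$ requires (and if $c|_N$ is empty the premise is vacuous and the conclusion immediate). If instead some literal $l_0\in c$ is witnessed true, then $c$ is discarded, but the inductive hypothesis on the branch $\kb\cup\{l_0\}\models_{z-1}\phi$ gives $(\kb\cup\{l_0\})|_N=\kb|_N\models_{z-1}\phi|_N$, since $l_0|_N$ is true and dropped, and monotonicity in the level upgrades this to $\kb|_N\models_z\phi|_N$. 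Combining the base case with these two subcases closes the induction and yields $(\kb|_N)\models_z(\phi|_N)$.
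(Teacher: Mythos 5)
Your proof is correct, and it is substantially more detailed than what the paper offers. The paper's entire justification is the remark that restricting a ground formula is equivalent to conjoining the literals that $N$ makes true onto both $\kb$ and $\phi$, so that the claim reduces to monotonicity of $\models_z$ under enlarging the knowledge base. That sketch, if expanded, would still have to explain why the added unit clauses interact correctly with $\UU$ and $\V$ --- which is exactly the content of your commutation lemma (every $c\in\UU(\kb)$ has $c|_N$ either witnessed true or in $\V(\kb|_N)$), and in particular your case (iii), where restriction turns a longer clause into a unit that must then be propagated one extra step in $\kb|_N$. So you have, in effect, supplied the missing proof rather than an alternative one. Two further points in your favor: you are explicit about the convention for $\kb|_N$ (dropping satisfied clauses, deleting falsified literals), which the paper's recursive definition of restriction does not actually settle for use inside $\models_z$; and your handling of the split case --- using the surviving clause $c|_N$ when no literal is witnessed true, and collapsing to the single branch $l_0$ plus level-monotonicity when one is --- is exactly right. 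The only elisions are harmless: in case (iii) the witness $c_2\subseteq \neg l\lor(c|_N)$ may not actually contain $\neg l$, in which case no propagation step is needed; and level-monotonicity from $z-1=0$ requires $\kb|_N\neq\emptyset$, which is automatic whenever the inductive hypothesis yields $\kb|_N\models_0\phi|_N$ for a non-trivial $\phi|_N$. Neither affects correctness.
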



Basically, if \( \phi \) is entailed at level \( z \) from \( \kb \), then any restriction of \( \phi \)  under \( N \) must also be entailed by \( \kb \) restricted to \( N \), at least at level \( z \) if not lower. Notice that restricting a ground formula is equivalent to simply conjoining the literals true at \( N \) with both \( \phi \) and \( \kb \), from which the proof follows. Now, recall from Theorem \ref{thm grounding trick}, given a \( \prop \) KB \( 
\kb \) and ground query \( 
\phi \), we have \( \kb \models \phi \) iff \( \ou(\kb\land \neg\alpha) \) is unsatisfiable. Here, since \( 
\alpha \) is already ground, 
we really only need to make sure that \( \kb \) is ground wrt all the names in \( \kb \land \neg \alpha \) and \( k \) new ones, \( k \) being the rank of \( \kb. \) So let \( \gnd\su \alpha(\kb) \) denote precisely such a  grounding of \( \kb \). It then follows that  \( \gnd \su \alpha(\kb) \models \alpha \) iff  \( \ou(\kb\land \neg\alpha) \) is unsatisfiable iff \( \kb\models \alpha. \) So let Algorithm $1'$ be exactly like Algorithm 1 except that it accepts a parameter \( z \) (for limited reasoning) and replaces the following check: \begin{itemize}
	\item[] \(\gnd(\Delta \land \neg\alpha,\{c_1,\ldots,c_k\})|_{N^{(i)}} \) is unsatisfiable\quad\quad\quad {\it {\bf with}} 
\end{itemize}  \begin{itemize}
	\item[]   \( \gnd(\kb,\{c_1,\ldots,c_k, d_1, \ldots, d_m\})|_{N^{(i)}}\models  \sub z (\alpha|_{N^{(i)}}) \), where \( \set{d_1, \ldots, d_m} \) is the set of names appearing in \( 
	\alpha \) but not in \( \kb \). 
\end{itemize}

\begin{theorem} Let \( \delta,\gamma,k, m \) be as in Theorem \ref{mainthm}, and let \( z\in \nat. \) Then with a probability at least \( 1-\delta, \) Algorithm $1'$ returns a value \( \hat p \) such that: {\bf (I)} and {\bf (II)} is as in Theorem \ref{mainthm} except for {\bf (II.1)} which states that \( \kb\land {\cal I} \models \sub z \alpha. \) 
%
The algorithm runs in time \( O( 1 / \gamma^2 \times (|\phi|\times|\kb|) ^{z+1} \times log (1/\delta) ) \). 
	
\end{theorem}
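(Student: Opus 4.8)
The plan is to mirror the two-part structure of the proof of Theorem~\ref{mainthm}, replacing the unsatisfiability test by the level-$z$ entailment test and feeding in the two properties of $\models\sub z$ just established: its \emph{soundness} (Theorem~\ref{thm liu}) for the no-overestimation half, and its \emph{restriction-closure} (the preceding Proposition) for the no-underestimation half. Throughout I would write $C^\ast=\{c_1,\ldots,c_k\}\cup\{\text{names in }\kb,\alpha\}$ for the name set over which Algorithm~$1'$ grounds on a given tuple, so that the check performed on iteration $i$ is $\gnd(\kb,C^\ast)|_{N^{(i)}}\models\sub z(\alpha|_{N^{(i)}})$, and I would read hypothesis~(II.1) at the ground level, consistent with the $\gnd\su\alpha$ convention, as $\gnd(\kb\land\I,C^\ast)\models\sub z\alpha$ (the rank bound in assumption~2 ensures $k$ fresh names suffice).

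For Part~I, I would show that whenever the check fires on iteration $i$, the underlying complete model $M^{(i)}$ satisfies $\kb\supset\alpha$; the bound on $\hat p$ then follows from Hoeffding exactly as before. Soundness of $\models\sub z$ upgrades the passing check to the classical entailment $\gnd(\kb,C^\ast)|_{N^{(i)}}\models(\alpha|_{N^{(i)}})$. Since $N^{(i)}$ is consistent with $M^{(i)}$, restriction under $N^{(i)}$ preserves truth at $M^{(i)}$, i.e.\ $M^{(i)}\models\psi|_{N^{(i)}}$ iff $M^{(i)}\models\psi$ for any ground $\psi$; and every clause of $\gnd(\kb,C^\ast)$ is a ground instance of the universal $\kb$. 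Hence if $M^{(i)}\models\kb$ then $M^{(i)}\models\gnd(\kb,C^\ast)$, so $M^{(i)}\models\gnd(\kb,C^\ast)|_{N^{(i)}}$, so $M^{(i)}\models\alpha|_{N^{(i)}}$, so $M^{(i)}\models\alpha$; chaining gives $M^{(i)}\models\kb\supset\alpha$. Thus $\hat p$ is at most the empirical frequency with which $\kb\supset\alpha$ holds, and Hoeffding with $m\geq\frac{1}{2\gamma^2}\ln\frac{2}{\delta}$ gives $\hat p\leq p+\gamma$ with probability $1-\delta/2$.

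For Part~II, I would show that whenever $\I$ is witnessed true for $C^\ast$ in $N^{(i)}$ (probability at least $p$ by assumption~3), the check fires. Applying restriction-closure to (II.1) yields $\gnd(\kb\land\I,C^\ast)|_{N^{(i)}}\models\sub z(\alpha|_{N^{(i)}})$. Because $\I$ is witnessed true on $C^\ast$, every clause of $\gnd(\I,C^\ast)$ restricts to the constant true; such vacuous clauses can never serve as a unit for $\UU(\cdot)$ nor be selected for a split, so they contribute nothing to the level-$z$ machinery and the very same derivation witnesses $\gnd(\kb,C^\ast)|_{N^{(i)}}\models\sub z(\alpha|_{N^{(i)}})$, which is the algorithm's test. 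Hence $v$ is incremented on every such $i$, so $\hat p=v/m$ is at least the empirical witnessing frequency, and Hoeffding again gives $\hat p\geq p-\gamma$ with probability $1-\delta/2$. A union bound combines the two halves into probability $1-\delta$.

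The running time is read off by multiplying the number of level-$z$ tests by the per-test cost $O((|\phi|\times|\kb|)^{z+1})$ from Theorem~\ref{thm liu} (with $|\phi|,|\kb|$ now the sizes of the restricted query and grounded KB), using $m=O(\frac{1}{\gamma^2}\log\frac1\delta)$ models. The delicate point, which I would write out most carefully, is the Part~II deletion step: removing clauses from a set can in general destroy a level-$z$ derivation, so I must argue in the correct direction, namely that the witnessed-true instances of $\I$ are \emph{unusable} in any derivation (they affect neither $\UU$ nor $\V$ nor the split rule), whence the derivation from the larger set is already a derivation from $\gnd(\kb,C^\ast)|_{N^{(i)}}$ alone. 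I must also keep the name-bookkeeping honest, confirming that $\models\sub z$ is only ever invoked on sets of ground clauses over $C^\ast$ as Theorem~\ref{thm liu} and the restriction-closure Proposition require; the Part~I soundness step is comparatively routine once the restriction/consistency identity is in hand.
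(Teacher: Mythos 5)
Your proposal is correct and follows exactly the route the paper intends: the paper in fact gives no explicit proof of this theorem, leaving it as the evident adaptation of the two-part Hoeffding argument of Theorem~\ref{mainthm}, with soundness of $\models\sub z$ (Theorem~\ref{thm liu}) substituting for the satisfiability reasoning in Part~I and the restriction-closure proposition substituting in Part~II. Your extra care on the one genuinely delicate step---that discarding the witnessed-true groundings of $\I$ cannot destroy a level-$z$ derivation, since $\models\sub z$ is only monotone in the clause set and so the paper's blithe set equality $\gnd(\kb\land\I,\cdot)|_N=\gnd(\kb,\cdot)|_N$ needs the observation that trivially true clauses are unusable by $\UU$, $\V$, and the split rule---is a point the paper's own treatment of the analogous step in Theorem~\ref{mainthm} glosses over, and is worth writing out.
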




%


{\bf Discussion.}~~ {Interestingly, in \cite{tractable-reasoning-in-first-order-knowledge}, it is shown that reasoning is also tractable in the first-order case if the knowledge base and the query both use a bounded number of variables. This would then mean that we would no longer be limited to ground queries and can handle queries with quantifiers. This direction is left for future research.} Nonetheless, we note that deciding quantified (as opposed to ground) queries appears to demand more from learning. In general, in an infinite domain, we cannot hope to observe in a finite partial model that universally quantified formulas are ever true. Thus, we anticipate that extensions that handle queries with quantifiers will need a substantially different framework, presumably with stronger assumptions.
One possible framework takes a more \textit{credulous} approach to the learning problem (in contrast to our \textit{skeptical} approach based on witnessing truth): we suppose that when a formula is frequently false on the distribution of examples, we also frequently obtain a partial model that witnesses the formula false---e.g., a partial model in which a binding of a candidate $\forall$-clause falsifies it.
 This is undoubtedly an assumption about the benevolent nature of the environment, captured as the notion of \textit{concealment} in \cite{michael2010partial}, but it does make learning conceptually simpler. In this framework, one permits all conclusions that are not explicitly falsified. Whether such an idea can be used for inductive generalization of FOL formulas over arbitrary distributions remains to be seen.

\section{Conclusions} 
\label{sec:tractable_reasoning}

In this work, we presented new results on the problem of answering queries about formulas of first-order logic (FOL) based on background knowledge partially represented explicitly as other formulas, and partially represented as examples independently drawn from a fixed probability distribution. By appealing to the paradigm of implicit learnability, we sidestepped many major negative results, leading to a learning regime that works with a general and expressive FOL fragment. No restrictions were posed on clause length, predicate arity, and other similar technical devices seen in PAC results. 
Overall, we hope the simplicity of the framework is appealing to the readers and hope our results will renew interest in learnability for expressive languages with  quantificational power.

\section*{Acknowledgements}
B.\ Juba was supported by NSF Award CCF-1718380. This work was partially performed while B.\ Juba was visiting the Simons Institute for the Theory of Computing.

\bibliographystyle{named}
\bibliography{group}

\end{document}